\def\eqref#1{equation~\ref{#1}}
\def\1{\bm{1}}
\def\eps{{\epsilon}}
\DeclareMathAlphabet{\mathsfit}{\encodingdefault}{\sfdefault}{m}{sl}
\SetMathAlphabet{\mathsfit}{bold}{\encodingdefault}{\sfdefault}{bx}{n}
\def\tY{{\tens{Y}}}
\DeclareMathOperator*{\argmax}{arg\,max}
\DeclareMathOperator*{\argmin}{arg\,min}
\def\argmin{{\arg\min}}
\def\argmax{{\arg\max}}
\def\bI{\mathbf{I}}
\def\bbE{\mathbb{E}}
\def\bbR{\mathbb{R}}
\def\cA{\mathcal{A}}
\def\cF{\mathcal{F}}
\def\cQ{\mathcal{Q}}
\def\cX{\mathcal{X}}
\def\cY{\mathcal{Y}}
\def\Ex{{\bbE}}
\def\eps{{\epsilon}}
\def\ie{{\it i.e.}}
\def\iid{{\sf iid}}
\def\reals{{\bbR}}
\newtheorem{theorem}{Theorem}[section]
\newtheorem{lemma}[theorem]{Lemma}
\def\tY{{Y'}}
\title{\vspace{-1cm}
Bayes classifier cannot be learned from noisy responses with unknown noise rates}
\author{Soham Bakshi\thanks{Equal contributions.}\\
% about author (webpage, alternative address)---\emph{not} for acknowledging
% funding agencies.  Funding acknowledgements go at the end of the paper.} \\
Department of Statistics\\
University of Michigan\\
\texttt{baksho@umich.edu} \\
\And 
Subha Maity\textsuperscript{$*$}\\
% \thanks{ Use footnote for providing further information
% about author (webpage, alternative address)---\emph{not} for acknowledging
% funding agencies.  Funding acknowledgements go at the end of the paper.} \\
Department of Statistics\\
University of Michigan\\
\texttt{smaity@umich.edu} \\
}
\begin{document}

\maketitle

\vspace{-0.7cm}

\begin{abstract}
Training a classifier with noisy labels typically requires the learner to specify the distribution of label noise, which is often unknown in practice. Although there have been some recent attempts to  relax that requirement, we show that the Bayes decision rule is unidentified in most classification problems with noisy labels. This suggests it is generally not possible to bypass/relax the requirement. In the special cases in which the Bayes decision rule is identified, we develop a simple algorithm to learn the Bayes decision rule, that does not require knowledge of the noise distribution. 
% remains a challenging problem to date. While most existing approaches require practitioners to specify the {noise rates} (a set of parameters that control the severity of noise in the response variable), some attempts have recently been made to relax that requirement. In this note we investigate a more fundamental question: \emph{whether it is at all possible to learn the Bayes classifier} for the clean response, \emph{when the noise rates are unknown}. We provide a comprehensive answer by studying the \emph{identifiability} of the Bayes classifier and find that in most cases {the task is theoretically impossible} apart from the special case of binary classification with balanced classes in which scenario a classifier learned from the noisy dataset using a weighted empirical risk minimization (ERM) {with any consistent loss function} produces a consistent estimator of the true Bayes classifier. 
\end{abstract}

\vspace{-0.3cm}
\section{Introduction}
% \vspace{-0.1cm}

In this paper, we consider classification with noisy labels. Let $\cX$ and $\cY$ be the feature/input and label/output spaces, respectively. The clean/noiseless samples $(X_i,Y_i)$ are drawn independently from $P_{X,Y}\in\Delta(\cX\times\cY)$ ($\Delta(\cA)$ is the space of probability measures on $\cA$), but the learner only observes the $(X_i, Y_i')$'s,  where $Y_i'$ is corrupted version of $Y_i$ from a conditional distribution: $Y'\mid Y\sim P_{Y'\mid Y}$. The learner seeks to estimate the Bayes classifier of $Y$ (the clean/noiseless label) 
\[\textstyle
f^\star _P(x) \triangleq \argmax_{y\in\cY} P(Y = y\mid X=x)
\]
from the noisy training data $\{(X_i,Y_i')\}_{i=1}^n$. The classification with noisy labels problem arises in many areas of science and engineering, including medical image analysis \citep{karimi2020deep} and crowdsourcing \citep{jiang2021learning}. 

When the label noise rates/distribution $P_{Y'\mid Y}$ is known or learnable from external data, there are several methods to recover $f^\star _P$ \citep{bylander1994learning,cesa1999sample,natarajan2013learning}. Unfortunately, $P_{Y'\mid Y}$ is often unknown to the learner in practice, which limits their applicability. 
% \YK{Discuss relevant papers here in here (instead of in separate ``Related work'' subsection)}
Recently, \citet{liu2020peer} propose a method based on \emph{peer prediction}, which provably recovers $f^\star _P$ when there are only two classes and they are balanced (but the label noise distribution is unknown). 
The framework of peer prediction is motivated from  \citet{dasgupta2013crowdsourced,shnayder2016informed}, and has been further developed for ranking problems with noisy labels \citep{wu2022peerrank}. A review on learning from noisy labels and peer prediction can be found in Appendix \ref{sec:related-work}.

% \YK{This guy also has some other papers that develop/extend the peer prediction idea further. Please cite the relevant appers here.} 

In this paper, we consider the statistical aspects of classification with noisy labels. Our main contributions are the following.\vspace{-0.1cm}
\begin{itemize}
   \item We show that the balanced binary classification problem is the only instance in which $f^\star _P$ can be learned without knowledge of the label noise distribution, while in more general problems (with imbalanced or more than two classes), that knowledge is essential.
\item We develop a new method based on weighted empirical risk minimization (ERM) that provably learns $f^\star _P$ in the balanced binary classification problem with noisy labels.
\end{itemize}

\vspace{-0.2cm}

\section{Identifiability of the Bayes classifier}
\vspace{-0.2cm}
In our setup a typical data-point $(X, Y, Y')$ (a triplet of feature, clean label and noisy label) comes from a true distribution $P \equiv P_{X, Y, \tY}$, whose full joint distribution is unknown. Since the learner only observes \iid\  $(X_i, Y'_i)$ pairs we assume that the  $P_{X, \tY}$ marginal is known.   Furthermore, we assume that the noise rates/distributions are \emph{instance independent}, \ie,  for any $x \in \cX$ and $y, y' \in \cY $ 
\begin{equation} \label{eq:ii}
 \textstyle   P(Y' = y' \mid Y = y, X = x) = P(Y' = y' \mid Y = y) \triangleq \eps_P(y', y)\,.
\end{equation} 
\vspace{-1cm}

% The Bayes classifier for clean label of the distribution $P$ is denoted as $f^\star_P$ and defined as the classifier with minimum misclassification rate, \ie, $f^\star_P \triangleq \argmin_{f : \cX \to \cY} P(f(X) \neq Y)$. 
% \begin{equation}
%     f^\star_P \triangleq \argmin_{f : \cX \to \cY} P(f(X) \neq Y)\,.
% \end{equation} 

% For our investigation we assume that the marginal distribution of $Y$ is $p \in \Delta^{\cY}$ (\ie, $p$ is a probability vector on $\cY$), and perform separate investigations for each $K \ge 2$ and $p \in \Delta^{\cY}$. 

In our investigation we fix the marginal of $Y$, \ie, for some $p \in \Delta(\cY)$ we assume that  $P_Y = p$. 
Thus, we define the class $\cQ(K, p)$ of all probabilities $Q \equiv Q_{X, Y, Y'}\in \Delta (\cX \times \cY \times \cY)$ that satisfy (1) $Q_{X, Y'} = P_{X, Y'}$, (2) has instance independent noise (\ie, $Q$ satisfies \eqref{eq:ii}), (3) $Q_Y = p$, and (4) the determinant of $E_Q = [[\eps_{Q}(y', y)]]_{y', y\in \cY}$ is positive. The final condition is a regularity condition on the noise rates and is satisfied if $Q(Y' \neq Y \mid Y =  y)$ are not too large. In fact, for binary classification it boils down to $\eps_{Q}(0, 1) + \eps_Q(1, 0) <1$, which is a rather weak assumption and standard in the literature \citep{natarajan2013learning,liu2020peer}.

In the following theorem, whose proof can be found in Appendix \ref{sec:non-identifiability-proof}, we investigate whether the Bayes classifier $f_Q^\star(x) \triangleq \argmin_{y \in \cY} Q(Y = y\mid X = x)$ is same for all the $Q\in \cQ(K, p)$.

\begin{theorem}[Identifiability of the Bayes classifier]
\label{th:identifiability}
 The Bayes classifier $f_Q^\star$ is unique for all $Q\in \cQ(K, p)$, \ie,\  $\{f_Q^\star: Q\in \cQ(K, p)\}$ is a singleton set \emph{if and only if} $K = 2$ and $p = (\nicefrac12, \nicefrac12)$. 
\end{theorem}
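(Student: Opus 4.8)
The plan is to reduce the whole question to linear algebra on the noise matrix. Write $\mu(x)\triangleq(P(Y'=y'\mid X=x))_{y'\in\cY}$ and $q\triangleq P_{Y'}$; both are \emph{fixed} across $Q\in\cQ(K,p)$ because $Q_{X,Y'}=P_{X,Y'}$. By instance independence, for each $x$ we have $\mu(x)=E_Q\,\eta^Q(x)$ with $\eta^Q(x)\triangleq(Q(Y=y\mid X=x))_{y\in\cY}$, and since $\det(E_Q)>0$ this inverts to $\eta^Q(x)=E_Q^{-1}\mu(x)$ and $f_Q^\star(x)=\argmax_{y}[E_Q^{-1}\mu(x)]_y$. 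Averaging $\eta^Q$ over $P_X$ and imposing $Q_Y=p$ turns the data-matching constraint into the linear identity $E_Q\,p=q$. Hence identifiability reduces to a statement about $\cE(p,q)\triangleq\{E:\ E\text{ column-stochastic},\ E\ge0,\ \det(E)>0,\ Ep=q\}$ (intersected with the admissibility requirement that $E^{-1}\mu(x)$ stay in $\Delta(\cY)$ for $P_X$-a.e.\ $x$, so that $Q$ is well defined): the Bayes rule is identified \emph{iff} all admissible $E$ induce the same $\argmax$-partition on the support of $\mu(X)$.

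For the ``if'' direction I would compute directly in the binary balanced case. Parametrising $E$ by the flip rates $a=\eps(1,0)$, $b=\eps(0,1)$ so that $\det(E)=1-a-b>0$, the constraint $Ep=q$ with $p=(\nicefrac12,\nicefrac12)$ reads $q_1=\tfrac12(a+1-b)$, which pins $a-b=q_1-q_0$ to an \emph{observable} quantity. A one-line inversion yields $\eta_1^Q(x)-\eta_0^Q(x)=\big(\mu_1(x)-\mu_0(x)-(a-b)\big)/(1-a-b)$; as the denominator is positive, the sign is governed by the fixed threshold, so $f_Q^\star(x)=\mathbf{1}[\mu_1(x)>q_1]$ for every admissible $Q$, and the set is a singleton.

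For the ``only if'' direction I would show that whenever $(K,p)\neq(2,(\nicefrac12,\nicefrac12))$ there are admissible $E_1,E_2$ and an interior simplex point $v$ with $\argmax E_1^{-1}v\neq\argmax E_2^{-1}v$, and then build an instance $P_{X,Y'}$ whose posterior $\mu(X)$ equals $v$ on a set of positive $P_X$-measure (while still averaging to $q$), so that the associated $Q_1,Q_2$ both lie in $\cQ(K,p)$ yet disagree there. Two sub-cases arise. For imbalanced binary ($p_0\neq p_1$), $Ep=q$ is one equation in $(a,b)$ cutting out a line along which $a-b$ genuinely varies (precisely because $p_0\neq p_1$); picking two points with distinct $a-b$ moves the single threshold, and any $v$ with $\mu_1$ strictly between the two thresholds separates the rules. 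For $K\ge3$, I would start from a base $E_0\in\cE(p,q)$ with strictly positive entries and seek a perturbation $D$ with columns summing to $0$ and $Dp=0$ (such $D$ fill a space of dimension $(K-1)^2\ge4$) for which some pairwise decision functional $(e_i-e_j)^\top E^{-1}$ \emph{rotates} rather than merely rescales as $E=E_0+tD$ varies; then $E_1=E_0+tD$ and $E_2=E_0-tD$ are admissible for small $t$ and straddle the $(i,j)$-boundary.

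I expect the $K\ge3$ construction to be the main obstacle, because balanced binary is exactly the degenerate situation where the lone admissible direction rescales the decision functional but never rotates it, and I must show this failure is special. The clean criterion is that the $(i,j)$-boundary is \emph{preserved} under $D$ iff $(e_i-e_j)^\top E_0^{-1}D$ is a scalar multiple of $(e_i-e_j)^\top$, and I would exhibit an explicit valid $D$ for which this fails: for $p$ uniform, at $E_0=I$ a $3\times3$ circulant-type direction already gives $(e_1-e_2)^\top D=(-1,2,-1)\not\parallel(1,-1,0)$, and non-proportionality persists for interior $E_0$ near $I$; a general $p$ is handled by solving $Dp=0$ inside the $(K-1)^2$-dimensional space. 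The surrounding bookkeeping---nonnegativity and positivity of $\det(E_i)$, and $E_i^{-1}\mu(x)\ge0$ on the chosen set---is routine once $E_0$ is taken well inside the interior and $\mu(X)$ is confined near $v$. One caveat worth recording is non-degeneracy: if $\mu(X)$ were almost surely constant then $\eta^Q(X)=p$ for all admissible $Q$ and even imbalanced or multiclass problems would be identifiable, so the impossibility is really a statement about the generic (``most'') instance, consistent with the abstract.
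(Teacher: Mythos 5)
Your proposal is correct, and it shares the paper's core reduction: invert the column-stochastic noise matrix $E$ (legitimate since $\det E>0$) and impose the marginal-matching constraint $Ep=q$; your binary analysis (the threshold is pinned at $q_1$ by observables precisely when $p$ is balanced, and moves with the noise rates otherwise) is essentially identical to the paper's. Where you genuinely diverge is the case $K\ge 3$. The paper proves non-identifiability there by two explicit \emph{global} constructions: for balanced $p$ it takes $E_2=E_1P$ with $P$ an even cyclic permutation matrix, so $E_2^{-1}$ permutes the posterior coordinates and relabels the argmax; for imbalanced $p$ it takes $E_2=(1-\delta)E_1+\delta p'\mathbf{1}_K^\top$ and shows via Sherman--Morrison that the posterior is shifted, $\tilde\alpha(x)=\frac{1}{1-\delta}(\alpha(x)-\delta p)$, which changes the argmax exactly when $p$ is non-uniform. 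Your route is perturbative instead: perturb an interior base $E_0$ along $D$ with $\mathbf{1}^\top D=0$ and $Dp=0$, and detect a change of the $(i,j)$ decision boundary via the criterion that $(e_i-e_j)^\top E_0^{-1}D$ not be proportional to $(e_i-e_j)^\top$ --- a criterion you state correctly, since $(E_0+tD)^{-1}=E_0^{-1}-tE_0^{-1}DE_0^{-1}+O(t^2)$. This is more unified than the paper's argument: the same criterion recovers the binary dichotomy (for $K=2$ the one admissible direction gives $(e_1-e_2)^\top E_0^{-1}D\propto(1,-p_1/p_2)$, parallel to $(1,-1)$ iff $p_1=p_2$) and treats balanced and imbalanced $K\ge3$ at once, where the paper needs two separate tricks; note also that the paper's imbalanced construction is exactly one direction in your constraint space, $D=\delta(p'\mathbf{1}_K^\top-E_1)$. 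What your route costs is the bookkeeping you flagged: the circulant example is only valid at an \emph{interior} base point (at $E_0=I$ the matrix $E_0-tD$ has negative entries), and for general $p$ you still owe the existence of a rotating $D$; this does follow from your dimension count --- the image of $D\mapsto(e_i-e_j)^\top E_0^{-1}D$ over the $(K-1)^2$-dimensional constraint space is at least two-dimensional when $K\ge3$, hence cannot lie in the line spanned by $(e_i-e_j)^\top$ --- but that argument should be written out. Finally, two points in your favor relative to the paper: your caveat about degenerate instances (if $\mu(X)$ is a.s.\ constant, the Bayes rule is identified even in the imbalanced or multiclass case, so non-identifiability is a statement about generic instances) is real, whereas the paper's proof only asserts the two rules ``may'' differ; and your explicit construction of an instance whose posterior concentrates near a separating interior point $v$ while averaging to $q$ and keeping $E_i^{-1}\mu(x)$ inside the simplex is needed for a complete proof and is left implicit in the paper.
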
 

\vspace{-0.1cm}

\subsection{The identifiable case}

% \vspace{-0.1cm}

Balanced binary classification \ie, $p = (\nicefrac12, \nicefrac12)$ is the special case when \emph{the Bayes classifier is unique, regardless of the noise rates}. This is the optimistic case where the noise rates are not required for learning $f^\star_{P}$. In this case \emph{we provide an alternative} to the peer loss framework \citep{liu2020peer} for learning $f_P^\star$ that relies on the popular weighted ERM method:
\begin{equation} \label{eq:weighted-erm}
  \textstyle  \min_{\eta \in \mathcal{F}}\frac{1}{n}  \sum_{i = 1}^n p_n(1 - Y'_i) \ell(\eta(X_i), Y_i') \,,
\end{equation} 
where $\mathcal F$ is a set of probabilistic classification models such that for some $\eta^\star \in \mathcal{F}$ it holds  $f_P^{\star}(x) = \mathbbm{1}\{\eta^\star(x) \ge \nicefrac{1}{2}\}$, $\ell$ is an appropriate loss function, and $ p_n(y) \triangleq \nicefrac{1}{n} \sum_{i = 1}^n \mathbbm{1}\{Y'_i = y\}$. The following lemma, whose proof can be found in Appendix \ref{sec:technical-results}, establishes that the weighted ERM in \eqref{eq:weighted-erm} using the noisy distribution ($P_{X, Y'}$) estimates $f_P^\star$, regardless of the noise rates.
\begin{lemma}[Weighted ERM]
\label{lemma:reweighting-informal}
Let $Y, Y' \in \{0, 1\}$, $P(Y = 1) = \nicefrac{1}{2}$, $\cF$ be the set of all binary classifiers on $\cX$,  $\ell(f(x) , y ) = \mathbbm{1}\{f(x) \neq y\}$ for $f\in \cF$  and $\eps_{P}(0, 1) + \eps_P(1, 0) <1$.
If $p'(1) = P(Y' = 1)$ and $p'(0) = P(Y' = 0)$ then regardless the values of $\eps_{P}(0, 1)$ and $\eps_P(1, 0)$ the Bayes classifier is recovered, \ie  \begin{equation}
    f^\star_P(x)= \argmin_{f \in \mathcal{F}} \Ex_P[p'(1 - Y') \ell(f(X), Y')]\,.
\end{equation}

% If $Q$ is constructed by reweighting $P$ with respect to the weights $w(x, y, y') \propto P(Y' = 1-y')$ then regardless of $\eps_{P}(0, 1)$ and $\eps_P(1, 0)$ the Bayes classifier for predicting $Y'$ in $Q$ is same as $f_P^\star$ \ie, 
% \begin{equation} \textstyle
% % \argmax_{y \in \{0, 1\}} Q(Y' = y\mid X = x) = \argmax_{y \in \{0, 1\}} P(Y = y\mid X = x)\,.
%   \{x: Q(Y' = 1\mid X = x) \ge  \nicefrac{1}{2} \}  = \{x: P(Y = 1\mid X = x) \ge  \nicefrac{1}{2} \}\,.
% \end{equation}
\end{lemma}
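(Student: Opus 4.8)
The plan is to exploit the fact that $\cF$ consists of \emph{all} binary classifiers, so the population weighted risk $R(f) = \Ex_P[p'(1-Y')\mathbbm{1}\{f(X)\neq Y'\}]$ can be minimized pointwise: for each fixed $x$ the optimal label is whichever of $\{0,1\}$ gives the smaller conditional weighted risk. Writing $q'(x) \triangleq P(Y'=1\mid X=x)$, predicting $f(x)=1$ incurs conditional risk $p'(1)\bigl(1-q'(x)\bigr)$ (one pays weight $p'(1)$ on the event $Y'=0$), while predicting $f(x)=0$ incurs $p'(0)\,q'(x)$. Hence the minimizer satisfies
\[
f(x)=1 \iff p'(1)\bigl(1-q'(x)\bigr)\le p'(0)\,q'(x) \iff q'(x)\ge p'(1),
\]
using $p'(0)+p'(1)=1$. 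This first step reduces the claim to showing that the threshold rule $\mathbbm{1}\{q'(x)\ge p'(1)\}$ coincides with $f^\star_P(x)=\mathbbm{1}\{q(x)\ge\tfrac12\}$, where $q(x)\triangleq P(Y=1\mid X=x)$.

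Next I would rewrite the noisy posterior $q'(x)$ in terms of the clean posterior $q(x)$ using the instance-independence assumption \eqref{eq:ii}. Conditioning on $Y$ and using $P(Y'=1\mid Y=1)=1-\eps_P(0,1)$ and $P(Y'=1\mid Y=0)=\eps_P(1,0)$ gives
\[
q'(x) = \bigl(1-\eps_P(0,1)\bigr)q(x) + \eps_P(1,0)\bigl(1-q(x)\bigr) = a\,q(x) + \eps_P(1,0),
\]
where $a \triangleq 1-\eps_P(0,1)-\eps_P(1,0)$, and the regularity condition $\eps_P(0,1)+\eps_P(1,0)<1$ guarantees $a>0$. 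Taking expectations over $X$ and using balancedness $\Ex[q(X)]=P(Y=1)=\tfrac12$ then yields $p'(1)=\Ex[q'(X)] = \tfrac{a}{2}+\eps_P(1,0)$.

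Finally, substituting both expressions into the threshold inequality gives
\[
q'(x)\ge p'(1) \iff a\,q(x)+\eps_P(1,0)\ge \tfrac{a}{2}+\eps_P(1,0) \iff q(x)\ge\tfrac12,
\]
where the additive term $\eps_P(1,0)$ cancels and, since $a>0$, dividing by $a$ preserves the inequality. Thus the weighted-ERM minimizer equals $f^\star_P$ irrespective of $\eps_P(0,1),\eps_P(1,0)$, as claimed. The only real subtlety — and the crux of the argument — is that this cancellation needs \emph{both} ingredients at once: positivity of $a$ (so the affine map $q\mapsto q'$ preserves the direction of the threshold) and the balanced marginal $P(Y=1)=\tfrac12$ (so that the intercept $\eps_P(1,0)$ appearing in $q'(x)$ exactly matches the intercept in $p'(1)$, which is precisely where the $p'(1-Y')$ reweighting earns its keep). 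Everything else is routine; ties at $q(x)=\tfrac12$ are handled by adopting the same tie-breaking convention on both sides.
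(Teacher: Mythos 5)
Your proof is correct, and it takes a genuinely different (and more economical) route than the paper's. The paper interprets the weighted ERM as \emph{unweighted} Bayes classification of the noisy label under a reweighted joint distribution $Q$ with density $q(x,y,y') \propto p(x,y,y')\,p'(1-y')$: it first proves a separate lemma that this reweighting preserves the instance-independence condition $X \perp Y' \mid Y$, computes the normalizing constant, and then shows by direct algebra with the joint densities that $\alpha_1(x)-\alpha_0(x)$ (the $Q$-density gap for $Y'$) is a positive multiple of $a_1(x)-a_0(x)$ (the $P$-density gap for $Y$), so the two decision boundaries coincide. You bypass the construction of $Q$ entirely: since $\cF$ is unrestricted, you minimize the weighted population risk pointwise to get the threshold rule $\mathbbm{1}\{q'(x)\ge p'(1)\}$, then use the affine relation $q'(x)=a\,q(x)+\eps_P(1,0)$ with $a=1-\eps_P(0,1)-\eps_P(1,0)>0$ and the identity $p'(1)=\tfrac{a}{2}+\eps_P(1,0)$ (which is exactly where $P(Y=1)=\tfrac12$ enters) to see the intercepts cancel and the threshold become $q(x)\ge\tfrac12$. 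Your version is shorter and makes the two essential ingredients — sign preservation from $a>0$ and intercept cancellation from balancedness — completely explicit; the paper's version buys a structural interpretation of the weights (they rebalance $Y'$), which connects to its reweighting lemma and to the distribution-class viewpoint used in the identifiability theorem. Both arguments share the paper's mild looseness about non-uniqueness of the argmin on the tie set $\{q(x)=\tfrac12\}$, which you at least flag explicitly.
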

% an  weighted ERM with any consistent loss function produces a classifier with desirable statistical consistency.
Thus, the weighted ERM is an alternative framework to the peer loss \citep{liu2020peer} for learning $f_P^\star$ from noisy labels. In Appendix \ref{sec:peer-loss} we compare these two frameworks, where we also highlight a drawback of the peer loss, that it may not be bounded below and may diverge to $-\infty$ while minimization. Our method, on the other hand, does not suffer from it. 

% \SM{add the weighted ERM equation here. do we need the criticism of the peer loss in main draft?}

Though the noise rates are not needed for the aforementioned case,  it is impossible to verify whether $P(Y = 1) = \nicefrac{1}{2}$ if nothing is known about $P_Y$. So, the weighted ERM may not be practical without precise information about $P_Y$, which is also required by the peer loss framework. 

% \vspace{-0.1cm}

\subsection{The non-identifiable cases}

% \vspace{-0.1cm}

For imbalanced binary classification or with more than two classes the Bayes classifier is not identifiable when the noise rates ($P_{Y' \mid Y}$) are unknown. In fact, for establishing the proof of Theorem \ref{th:identifiability} we construct two different $P_{Y'\mid Y}$'s that are compatible with the marginals $P_{X, Y'}$ and $P_Y$ but have different Bayes decision boundaries. This is the problematic case, where it is statistically impossible to learn the Bayes classifier owing to lack of identifiability, and \emph{an additional knowledge on $P_{Y'\mid Y}$} is essential for developing meaningful procedures.  

\section{Discussion}

% \vspace{-0.1cm}

In this study, we present a thorough examination of the identifiability of the Bayes classifier in classification scenarios with noisy labels where the noise rates are  unknown.
The necessity of knowing the noise rates (in general) is clear from our results: in almost all cases, it is impossible to learn the Bayes classifier for the true labels without this piece of information. We hope that our findings can help practitioners develop a better understanding about the limitations and requirements for learning classification models from noisy labels.

\subsubsection*{URM Statement}
The authors acknowledge that both the authors of this work meet the URM criteria of ICLR 2023 Tiny Papers Track.

\subsubsection*{Acknowledgments}
The authors would like to thank Prof. Yuekai Sun and Prof. Moulinath Banerjee for their insightful comments and discussions related to this work. 
Subha Maity was supported by the National Science Foundation (NSF) under grants no.\ 2027737 and 2113373 while working on this paper. 

\bibliography{sm}
\bibliographystyle{iclr2023_conference_tinypaper}
% \newpage
\appendix
\section{Related work}
\label{sec:related-work}
 \textbf{Learning with noisy response} has been a topic of great importance many areas of science and engineering, including medical image analysis \citep{karimi2020deep} and crowdsourcing \citep{jiang2021learning}.  It has produced a wide range of researches related to importance re-weighting algorithm \citep{liu2015classification}, robust cross-entropy loss for neural networks \citep{zhang2018generalized},  loss correction \citep{patrini2017making}, learning noise rates \citep{liu2015classification,patrini2017making,xiao2015learning}. A more comprehensive reviews about the literature can be found in \citet{song2022learning,liang2022review}.

Our work considers the classification problems with noisy labels when the noise rates are \textbf{instance independent}, as specified in \eqref{eq:ii}. When the noise rates are known, various methods for learning a classifier have been proposed by \citet{bylander1994learning,cesa1999sample,natarajan2013learning}. \citet{liu2020peer} recently introduced the peer loss approach for learning a classifier for binary classification problem without prior knowledge of the noise rates. However, it remains unclear whether this approach can be extended beyond binary classification. Our work addresses this gap and complements the findings by \citet{liu2020peer} in two ways: (1) our results explain why a classification task can be performed for balanced binary classification problems without requiring knowledge of the noise rates, and (2) we demonstrate that this is the only scenario in which the Bayes classifier is uniquely identified and a statistically consistent \cite{bartlett2006convexity} classification is possible.

\textbf{Peer loss} has been proposed by \citet{liu2020peer} for learning the classifier without knowing the noise rates, which is most related to this work. The peer loss has been motivated from the ideas of peer prediction \citep{prelec2004bayesian,miller2005eliciting,witkowski2012robust,dasgupta2013crowdsourced,witkowski2013dwelling,radanovic2016incentives,shnayder2016informed}, and has been further developed for ranking problems \citep{wu2022peerrank}.

% Our work complements the findings by \citet{liu2020peer} on two fronts: (1) our results reveal why it is possible to perform a classification task for balanced binary classification problems with noisy labels without requiring noise rates, a quantity that is required in previous works by \citet{natarajan2013learning}

% In this work we investigate the underspecification of the \textbf{Bayes classifier} for the clean labels. The Bayes classifier, which has minimum misclassification error among all the classifiers, is fundamental quantity of interest for any classification problem and all the classification methods e.g., decision tree, random forest, boosting, SVM consistently approximates/estimates the Bayes classifier \cite{bartlett2006convexity}. 

% \section{A list of notations}
\section{Synthetic experiment}

We empirically investigate the classification approach in \eqref{eq:weighted-erm} on a synthetic dataset for binary classification, whose description follows: (1) $Y \sim \text{Bernoulli}(p)$, (2) $X \mid (Y = y) \sim \textsc{N}_2(\nicefrac{1}{2} (2y-1) \mathbf{1}_2, \bI_2)$, and (3) $P(Y' = 0 \mid Y = 1) = \eps_1 $, $P(Y' = 1 \mid Y = 0) \eps_0$. We consider two situations: (1) the balanced case ($p = \nicefrac{1}{2}$), when the Bayes classifier is identified, and (2) an imbalanced case with $p = 0.35$, when the Bayes classifier is not identified. For both the cases we set $\eps_1 = \nicefrac{\{0.4 - (1 - \eps_0) (1 - p)\}}{p}$. Note that, for such a choice 
\[
P(Y ' = 0) = P(Y' = 0\mid Y = 1) p + P(Y' = 0\mid Y = 0) (1 - p) = \eps_1 p + (1 -\eps_0) (1 - p) = 0.4\,. 
\] We compare the classification approach in \eqref{eq:weighted-erm} with two baselines: (1) the \emph{oracle}, trained with the true $Y$, and (2) the \emph{baseline}, trained the noisy $Y'$ without any adaptation. We use logistic regression model for all the classifiers. We do not consider peer-loss \cite{liu2020peer} as a baseline for its failure case described in Appendix \ref{sec:peer-loss}.

In the balanced case (left plot in Figure \ref{fig:expt}) we see that the class balancing approach in \eqref{eq:weighted-erm} has identical performance to the oracle/ideal case for large sample sizes ($n_{\text{tr}} = 2500$), which is an evidence for it's statistical consistency \cite{lugosi2004bayes}. On contrary, for imbalanced case (right plot in the same figure) there is a gap between the  class balancing approach and the oracle, even for large sample sizes, \ie, the  class balancing approach is statistically inconsistent. 

\begin{figure}
    \centering
    \includegraphics[width = 0.7\textwidth]{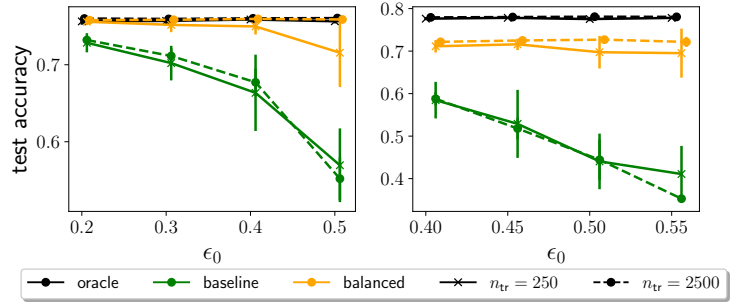}
    \caption{The consistency (resp. inconsistency) of class balancing approach in \eqref{eq:weighted-erm} for balanced (resp. imbalanced) binary classification, as observed in left (resp. right) plot.}
    \label{fig:expt}
\end{figure}
\section{Proof of Theorem \ref{th:identifiability}}
\label{sec:non-identifiability-proof}

For readers convenience we restate the Theorem \ref{th:identifiability}.
\begin{theorem}[Identifiability of the Bayes classifier]

The Bayes classifier $f_Q^\star$ is unique for all $Q\in \cQ(K, p)$, \ie,\  $\{f_Q^\star: Q\in \cQ(K, p)\}$ is a singleton set \emph{if and only if} $K = 2$ and $p = (\nicefrac12, \nicefrac12)$. 
\end{theorem}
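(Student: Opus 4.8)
The plan is to recast the identifiability question as linear algebra over a family of admissible noise matrices, prove the "if" direction by a one-parameter computation in the balanced binary case, and prove the "only if" direction by constructing two admissible distributions with different decision boundaries.

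\textbf{Reformulation.} For $Q\in\cQ(K,p)$ let $\eta'(x)=(Q(Y'=y'\mid X=x))_{y'\in\cY}$ be the observed posterior (fixed, since $Q_{X,Y'}=P_{X,Y'}$) and $\eta_Q(x)=(Q(Y=y\mid X=x))_{y\in\cY}$ the clean posterior. Instance independence \eqref{eq:ii} gives $\eta'(x)=E_Q\,\eta_Q(x)$, and since $\det E_Q>0$ the matrix is invertible, so
\begin{equation}
\eta_Q(x)=E_Q^{-1}\,\eta'(x).
\end{equation}
Integrating against $P_X$ converts the constraint $Q_Y=p$ into $E_Q\,p=p'$, where $p'=P_{Y'}$ is observed. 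Thus $\cQ(K,p)$ is parametrized by the column-stochastic matrices $E$ with $\det E>0$, $Ep=p'$, and $E^{-1}\eta'(x)\ge 0$ for $P_X$-a.e.\ $x$, and the Bayes rule of $Q$ selects the label maximizing the entries of $E^{-1}\eta'(x)$. The entire theorem reduces to asking whether this maximizing label is constant as $E$ ranges over the feasible set.

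\textbf{Sufficiency ($K=2$, $p=(\nicefrac12,\nicefrac12)$).} Parametrize $E$ by the flip rates $a=\eps_Q(1,0)$, $b=\eps_Q(0,1)$, so $\det E=1-a-b>0$. A direct inversion gives $\eta_{Q,1}(x)=(\eta'_1(x)-a)/(1-a-b)$, hence the Bayes rule predicts $1$ exactly when $\eta'_1(x)>\nicefrac12+(a-b)/2$. The constraint $Ep=p'$ with uniform $p$ forces $a-b=p'_1-p'_0$, which is pinned down by the observed marginal; the threshold collapses to the constant $p'_1$, independent of the noise, so $\{f_Q^\star\}$ is a singleton.

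\textbf{Necessity.} The same computation shows that in the imbalanced binary case the constraint becomes $a\,p_0-b\,p_1=p'_1-p_1$; the gradient $(p_0,-p_1)$ of this linear form is parallel to the gradient $(1,-1)$ of $a-b$ only when $p_0=p_1$, so whenever $p\ne(\nicefrac12,\nicefrac12)$ the quantity $a-b$, and hence the threshold, genuinely varies along the feasible segment. Choosing two feasible pairs $(a,b)$ with distinct $a-b$ and an observed law for which $\eta'_1(X)$ has positive mass between the two thresholds yields $Q_1,Q_2\in\cQ$ with distinct Bayes rules. For $K\ge 3$ the feasible set of $E$ has dimension $(K-1)^2\ge 4$, so there is a nonzero perturbation $\Delta$ with zero column sums and $\Delta p=0$; then $E_t=E_0+t\Delta$ stays feasible for small $t$, while $E_t^{-1}\eta'$ moves (to first order by $-E_0^{-1}\Delta E_0^{-1}\eta'$), and one selects an observed law whose posterior $\eta'(X)$ lands in the open region where $E_0^{-1}\eta'$ and $E_t^{-1}\eta'$ have different maximizing coordinates.

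\textbf{Main obstacle.} The delicate part is entirely in necessity: one must exhibit two admissible noise matrices (column-stochastic, positive determinant, inducing the correct marginal, and yielding nonnegative recovered posteriors $E_i^{-1}\eta'(x)$) \emph{together with} an observed distribution $P_{X,Y'}$ guaranteeing that their Bayes rules disagree on a set of positive $P_X$-probability. Coordinating feasibility with a genuine, positive-measure movement of the decision boundary, especially for $K\ge 3$ where the disagreement region is a polytope that must be shown to have nonempty interior, is where the real work lies.
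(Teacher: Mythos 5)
Your reformulation (recover the clean posterior as $E^{-1}\eta'(x)$, translate $Q_Y=p$ into the linear constraint $Ep=p'$, and ask whether the argmax is constant over the feasible set of noise matrices) is the same as the paper's, and your binary analysis matches the paper's almost line by line: the paper inverts the $2\times2$ noise matrix (Lemma \ref{lemma:invert}), eliminates one flip rate via $Ep=p'$, and observes that the resulting threshold depends on the remaining free rate exactly when $2p-1\neq 0$. Your sufficiency computation and your binary necessity argument are correct.

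The genuine gap is in your necessity argument for $K\ge 3$. From the existence of a nonzero feasible perturbation $\Delta$ (zero column sums, $\Delta p=0$) you infer that $E_t^{-1}\eta'$ moves, and then assert that one can choose an observed law on which the maximizing coordinate changes. But a nonzero motion of the recovered posterior does \emph{not} imply that the argmax can ever change: there are nonzero feasible directions whose induced motion is argmax-preserving for \emph{every} observed law. Concretely, take $\Delta=c\,(E_0-p'\mathbf{1}_K^\top)$; it is feasible ($\mathbf{1}_K^\top\Delta=0$ and $\Delta p=c(p'-p')=0$), and since $E_0^{-1}\Delta=c(I_K-p\mathbf{1}_K^\top)$ with $I_K-p\mathbf{1}_K^\top$ idempotent, the recovered posterior $w=E_0^{-1}\eta'(x)$ moves exactly as $w\mapsto(1-s)w+s\,p$ for some $s$ depending on $t$. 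When $p$ is uniform this motion never changes $\argmax_k w_k$, no matter what $P_{X,Y'}$ is. This is not a pathology to be waved away: it is precisely the mechanism that makes the balanced \emph{binary} case identifiable even though its feasible set is a nontrivial one-parameter family, so any correct multiclass argument must explicitly avoid such directions. Your dimension count $(K-1)^2\ge 4$ does not by itself do this; you would need to additionally show that the argmax-preserving directions form a strictly smaller family (they do --- one can show they are exactly the one-dimensional family above), or else produce explicit non-degenerate choices, which is what the paper does: in the balanced case it takes $E_2=E_1P$ with $P$ an even ($3$-cycle) permutation, so the recovered posteriors are permuted coordinate-wise and the argmax visibly moves while $\det E_2=\det E_1>0$ and $E_2p=p'$; in the imbalanced case it takes $E_2=(1-\delta)E_1+\delta p'\mathbf{1}_K^\top$ --- exactly the ``argmax-preserving'' direction above, which however \emph{does} change the argmax precisely because $p$ is non-uniform, as the Sherman--Morrison computation $\tilde\alpha(x)=\frac{1}{1-\delta}\{\alpha(x)-\delta p\}$ shows. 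Your ``main obstacle'' paragraph correctly locates the difficulty (nonemptiness of the disagreement region, plus nonnegativity of the recovered posteriors, which your chosen-law freedom can handle), but as written the argument would fail for an unlucky choice of $\Delta$ in the balanced case, so the missing classification of argmax-preserving directions is an essential, not cosmetic, step.
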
 

For $Q\in \cQ(K, p)$ we recall the definition of the Bayes classifier that 
\[
f_Q^\star (x) = \argmax_{k \in \cY} Q(Y = k\mid X = x) = \argmax_{k \in \cY} Q(Y = k\mid X = x) p_X(x)
\] where $p_X$ is the density function of $X$. Note that the marginal of $X$ are same for $P$ and $Q$.  Henceforth, we define the class probabilities of $Y'$ as $Q_{Y'} = P_{Y'} = p'$,
\[
\begin{aligned}
\alpha_k(x) & = Q_{X, Y}(x, k) \triangleq Q(Y= k \mid X = x) p_X(x),\\
a_{k}(x) &= Q_{X, Y'}(x, k) \triangleq Q(Y'= k \mid X = x) p_X(x)\,.
\end{aligned}
\]
 We further recall that $\eps(k',k) = Q_{Y'|Y}(Y' = k'\mid Y = k)$. 
 
% First of all note that identifying the Bayes classifier $f^{*}_{Q}$ is equivalent to identifying Bayes decision boundary which basically depends on the ordering of $\{Q_{(X,Y)}(x,k)\}_{k \in [K]}$ for each $x \in \mathcal{X}$. To be more precise, the Bayes classifier $f^{*}_{Q}(x) = k$ if $Q_{(X,Y)}(x,k) > Q_{(X,Y)}(x,k')$ for all $k' \not= k \in [K]$. Hence, $\{f_Q^\star: Q\in \cQ(K, p)\}$ is a singleton set if and only if for each $x \in \cX$, $\text{argmax}_{k \in [K]}Q_{(X,Y)}(x,k)$ is same for all $Q \in \cQ(K,p)$. So, basically from now on to show that the Bayes classifier is not unique, we will show that we can change $\argmax_{k \in [K]}Q_{(X,Y)}(x,k)$, changing the error rates of $Q$ but still satisfying all the properties of $\cQ$.

% \textbf{Notations:} Say, $Q_{Y'} = P^{*}_{Y'} = p'$ (the class probabilities of $Y'$) and $\eps(k',k) = Q_{Y'|Y}(k'|k)$. We also write $\alpha_{k}(x) = Q_{(X,Y)}(x,k)$ and $a_{k}(x) = Q_{(X,Y')}(x,k)$ for all $k \in [K]$ and $x \in \cX$. 

\subsection{The binary classification case} 
Here we let $\cY = \{1, 2\}$ and establish that the Bayes decision boundary is unique if and only if $p = (\nicefrac12, \nicefrac12)$. We begin with a lemma that we require for the proof.

% \textbf{Case 1:} $\boxed{K = 2}$
% This is the case of binary classification and we shall prove that the Bayes decision boundary is unique if and only if $p = (\nicefrac12, \nicefrac12)$.

\begin{lemma} \label{lemma:invert}
The following holds 
\begin{equation}\label{eq:invert}
    \begin{bmatrix} \alpha_{1}(x)\\ \alpha_{2}(x) \end{bmatrix} = \frac{1}{1-\eps(1,2) - \eps(2,1)}\begin{bmatrix} 1-\eps(1,2) & -\eps(1,2) \\ -\eps(2,1) & 1-\eps(2,1) \end{bmatrix} \begin{bmatrix} a_{1}(x)\\ a_{2}(x) \end{bmatrix}\,.
\end{equation}

% $\begin{bmatrix}
% \alpha_{1}(x)\\
% \alpha_{2}(x)
% \end{bmatrix}$ = $E \begin{bmatrix} a_{1}(x)\\ a_{2}(x) \end{bmatrix}$, where 
% $E = \frac{1}{1-\eps(1,2)-\eps(2,1)}
% \begin{bmatrix}
% 1-\eps(1,2) & -\eps(1,2)\\
% -\eps(2,1) & 1-\eps(2,1)
% \end{bmatrix}$. 
\end{lemma}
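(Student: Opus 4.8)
The plan is to observe that \eqref{eq:invert} is nothing but the inversion of the $2\times2$ noise matrix $E_Q$: the vector of clean joint densities $(\alpha_1(x),\alpha_2(x))$ is recovered from the vector of observed noisy joint densities $(a_1(x),a_2(x))$ by applying $E_Q^{-1}$. So I would first derive the forward linear relation that maps the $\alpha$'s to the $a$'s, and then invert it.

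First I would express each $a_{k'}(x)$ in terms of the $\alpha_k(x)$. Fixing $x$ and conditioning on the clean label, the law of total probability together with the instance-independence assumption \eqref{eq:ii} gives
\[
Q(Y'=k'\mid X=x)=\sum_{k\in\cY}Q(Y'=k'\mid Y=k)\,Q(Y=k\mid X=x)=\sum_{k}\eps(k',k)\,Q(Y=k\mid X=x).
\]
Multiplying through by $p_X(x)$ yields $a_{k'}(x)=\sum_k \eps(k',k)\,\alpha_k(x)$, i.e. $(a_1(x),a_2(x))^\top=E_Q\,(\alpha_1(x),\alpha_2(x))^\top$, where $E_Q$ has $(k',k)$ entry $\eps(k',k)$.

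Next I would invert $E_Q$. Using that each column of $E_Q$ sums to one (since $\sum_{k'}\eps(k',k)=1$), I have $\eps(1,1)=1-\eps(2,1)$ and $\eps(2,2)=1-\eps(1,2)$, so a direct computation gives $\det E_Q = 1-\eps(1,2)-\eps(2,1)$, which is exactly the normalizing constant appearing in \eqref{eq:invert} and is strictly positive by the regularity condition defining $\cQ(K,p)$ (namely $\eps(1,2)+\eps(2,1)<1$ for $K=2$). Applying the standard formula for the inverse of a $2\times2$ matrix and substituting these entries reproduces exactly the matrix on the right-hand side of \eqref{eq:invert}; left-multiplying $(a_1(x),a_2(x))^\top$ by $E_Q^{-1}$ then finishes the proof.

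There is no real obstacle here — the content is elementary linear algebra. The only point demanding care is the bookkeeping of the indices of $\eps(\cdot,\cdot)$, namely keeping straight which argument is the source (clean) label and which is the target (noisy) label, so that $E_Q$ and its column-stochastic structure are set up correctly. The single genuine hypothesis being invoked is positivity of $\det E_Q$: this is precisely what makes the noise map invertible, and hence guarantees that the clean joint densities can be written as the stated linear combination of the observed noisy ones.
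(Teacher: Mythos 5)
Your proposal is correct and follows essentially the same route as the paper's proof: derive the forward relation $(a_1(x),a_2(x))^\top = E_Q(\alpha_1(x),\alpha_2(x))^\top$ from the law of total probability together with the instance-independence assumption, then invert the resulting $2\times 2$ column-stochastic matrix, whose determinant $1-\eps(1,2)-\eps(2,1)$ is positive by the regularity condition on $\cQ(K,p)$. The only cosmetic difference is that the paper decomposes the joint density $Q_{(X,Y')}(x,k')$ directly over the clean label rather than working with conditionals and multiplying by $p_X(x)$, which is the same computation.
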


\begin{proof}[Proof of Lemma \ref{lemma:invert}]
  Let's start with $a_{1}(x)$, by definition $a_{1}(x) = Q_{(X,Y')}(x,1)$.
 \[
 \begin{aligned}
     a_{1}(x) & = Q_{(X,Y')}(x,1) = Q_{(X,Y,Y')}(x,1,1)+ Q_{(X,Y,Y')}(x,2,1) \\
     & = Q_{X}(x)Q_{Y|X}(1|x)Q_{Y'|(X,Y)}(1|(x,1)) + Q_{X}(x)Q_{Y|X}(2|x)Q_{Y'|(X,Y)}(1|(x,2)) \\
     & = Q_{X}(x)Q_{Y|X}(1|x)Q_{Y'|Y}(1|1) + Q_{X}(x)Q_{Y|X}(2|x)Q_{Y'|Y}(1|2) \\
     & = Q_{(X,Y)}(x,1)Q_{Y'|Y}(1|1) + Q_{(X,Y)}(x,2)Q_{Y'|Y}(1|2) \\
     & = (1-\eps(2,1))\alpha_{1}(x) + \eps(1,2)\alpha_{2}(x),
     \end{aligned}
 \]
where in the third equality we use $X \perp Y' \mid Y$, that the noise rates are instance independent.  Similarly for $a_{2}(x) = Q_{(X,Y')}(x,2)$ we have 
\[
a_{2}(x)= \eps(2,1)\alpha_{1}(x) + (1-\eps(1,2))\alpha_{2}(x)\,.
\]
%  \[
%  \begin{aligned}
%      a_{2}(x) & = Q_{(X,Y')}(x,2) = Q_{(X,Y,Y')}(x,1,2)+ Q_{(X,Y,Y')}(x,2,2) \\
%      & = Q_{X}(x)Q_{Y|X}(1|x)Q_{Y'|(X,Y)}(2|(x,1)) + Q_{X}(x)Q_{Y|X}(2|x)Q_{Y'|(X,Y)}(2|(x,2)) \\
%      & = Q_{X}(x)Q_{Y|X}(1|x)Q_{Y'|Y}(2|1) + Q_{X}(x)Q_{Y|X}(2|x)Q_{Y'|Y}(2|2) \\
%      & = Q_{(X,Y)}(x,1)Q_{Y'|Y}(2|1) + Q_{(X,Y)}(x,2)Q_{Y'|Y}(2|2) \\
%      & = \eps(2,1)\alpha_{1}(x) + (1-\eps(1,2))\alpha_{2}(x).
%      \end{aligned}
%  \]
 In matrix notation, we have
$\begin{bmatrix} a_{1}(x)\\ a_{2}(x) \end{bmatrix}$ = $\begin{bmatrix} 1-\eps(2,1) & \eps(1,2) \\ \eps(2,1) & 1-\eps(1,2) \end{bmatrix}$ $\begin{bmatrix} \alpha_{1}(x)\\ \alpha_{2}(x) \end{bmatrix}$, which we invert to conclude the proof. 
$$\begin{bmatrix} \alpha_{1}(x)\\ \alpha_{2}(x) \end{bmatrix} = \frac{1}{1-\eps(1,2) - \eps(2,1)}\begin{bmatrix} 1-\eps(1,2) & -\eps(1,2) \\ -\eps(2,1) & 1-\eps(2,1) \end{bmatrix} \begin{bmatrix} a_{1}(x)\\ a_{2}(x) \end{bmatrix}.$$

\end{proof}

Now, $Q_Y = p$ whenever  $Q\in \cQ(2, p)$. Denoting $p_1 = Q(Y = 1)$ and $p_1' =  Q(Y' = 1)$ we notice that 
\[
\begin{aligned}
p_1' &=   Q(Y' = 1|Y = 1)Q(Y = 1) + Q(Y' = 1|Y = 2)Q(Y = 2)\\
& = p_1(1-\eps(2,1) + (1-p_1)\eps(1,2),\\
\text{or,} ~~~~ \eps(2,1) &= \frac{p_1 - p_1' + \eps(1,2) - \eps(1,2)p_1}{p_1}\,.
\end{aligned}
\] 
% whose simplification yields
% \[ \textstyle
% \eps(2,1) = \frac{p_1 - p_1' + \eps(1,2) - \eps(1,2)p_1}{p_1}\,.
% \]
%  Simplifying, $$p = \frac{p' - \eps(1,2)}{1- \eps(1,2) - \eps(2,1)} \iff \eps(2,1) = \frac{p - p' + \eps(1,2) - \eps(1,2)p}{p}.$$
We now use the above equation to replace $\eps(2,1)$ in Lemma \ref{lemma:invert}. Notice that, according to  Lemma \ref{lemma:invert}
% $\alpha_{1}(x)- \alpha_{2}(x)$ in terms of $a_{1}(x)$ and $a_{2}(x)$ as:
\[
\alpha_{1}(x) - \alpha_{2}(x) = \frac{(1- \eps(1,2) + \eps(2,1)) a_{1}(x) - (1- \eps(2,1) + \eps(2,1))a_{2}(x)}{1- \eps(1,2) - \eps(2,1)}\,.
\]
We now plug in the expression of $\eps(2,1)$ in terms of $\eps(1,2)$, $p'$ and $p$, and obtain
\[
\begin{aligned}
& 1- \eps(1,2) + \eps(2,1)= \frac{p - p\eps(1,2) + p - p' +\eps(1,2)- \eps(1,2)p}{p} = 2- \frac{p'+\eps(1,2)(2p-1)}{p}\\
& 1- \eps(2,1) + \eps(2,1) = \frac{p+p\eps(1,2)-p+p'-\eps(1,2)+\eps(1,2)p}{p}=\frac{p'+\eps(1,2)(2p-1)}{p}.
\end{aligned}
\]
Thus, 
\begin{equation} \label{eq:decision-boundary-class-balanced}
    \begin{aligned}
    & (1- \eps(1,2) - \eps(2,1))(\alpha_{1}(x) - \alpha_{2}(x)) \\
    &= (1- \eps(1,2) + \eps(2,1)) a_{1}(x) - (1- \eps(2,1) + \eps(2,1))a_{2}(x) \\
    &= 2a_{1}(x) - \frac{p' + \eps(1,2)(2p-1)}{p}(a_{1}(x)+a_{2}(x))
\end{aligned}
\end{equation}

Since $\eps(1,2) + \eps(2,1) < 1$ we have
\[
\begin{aligned}
f_Q^\star(x) &= \mathbbm{1}\Big\{ \frac{Q_{X, Y}(x, 1)}{Q_{X, Y}(x, 1) + Q_{X, Y}(x, 2)} \ge \frac{1}{2}\Big\}\\ & = \mathbbm{1}\Big\{ \frac{\alpha_1(x)}{\alpha_1(x) + \alpha_2(x)} \ge \frac{1}{2}\Big\} \\
& = \mathbbm{1} \{\alpha_{1}(x) - \beta_{1}(x) \ge 0\}  = \mathbbm{1} \Big\{2a_{1}(x) - \frac{p' + \eps(1,2)(2p-1)}{p}(a_{1}(x)+a_{2}(x))\Big\}
\end{aligned}
\] where $a_k(x) = Q_{X, Y'}(x, k)$, $p$ and $p'$ are determined within $\cQ(2, p)$ class. The only quantity that is not determined is $\epsilon(1,2)$. However, we notice that $f_Q^\star$ is independent of the value of $\epsilon(1,2)$ if and only if $p = (\nicefrac{1}{2}, \nicefrac{1}{2})$. For the binary classification case this concludes $\{f_Q^\star: Q\in \cQ(K, p)\}$ is singleton if and only if $p = (\nicefrac{1}{2}, \nicefrac{1}{2})$. 

% Now, the optimal decision boundary depends on the sign of $\alpha_{1}(x) - \beta_{1}(x)$ which is basically the sign of $2a_{1}(x) - \frac{p' + \eps(1,2)(2p-1)}{p}(a_{1}(x)+a_{2}(x))$, which is free of $\epsilon(1,2)$ only when $p = \frac{1}{2}$. So, if $p = \frac{1}{2}$, the optimal decision boundary is unique and can be determined by the sign of $2a_{1}(x) - \frac{p'}{p}(a_{1}(x) + a_{2}(x))$. Otherwise if $p \not= \frac{1}{2}$, then the optimal decision boundary can be changed by changing $\eps(1,2)$.

\subsection{The multiclass classification case}

For $K \ge 3$ we shall prove that the Bayes decision boundary is never unique. This case is further divides in two subcases: (i) balanced $Y$ \ie, $p =  \nicefrac{\mathbf{1}_K}{K} = \nicefrac{1}{K}(1, 1, \dots, 1)^\top$ and (ii) imbalanced $Y$ \ie, $p \neq \nicefrac{\mathbf{1}_K}{K}$. Similar to the binary case
for $\alpha_{k}(x) =  Q_{(X,Y)}(x,k)$ and $a_{k}(x) =  Q_{(X,Y')}(x,k)$  we have  $$\begin{bmatrix}
    a_{1}(x) \\
    a_{2}(x) \\
    \vdots \\
    a_{K}(x)
\end{bmatrix} = E \begin{bmatrix}
    \alpha_{1}(x) \\
    \alpha_{2}(x) \\
    \vdots \\
    \alpha_{K}(x)
\end{bmatrix}, \text{ where } E = \begin{bmatrix}
    \eps(1,1) & \eps(1,2) & \cdots & \eps(1,K) \\
    \eps(2,1) & \eps(2,2) & \cdots & \eps(2,K) \\
    \vdots & \vdots & \cdots & \vdots \\
    \eps(K,1) & \eps(K,2) & \cdots & \eps(K,K)
\end{bmatrix}\,,$$ which implies 
\[
[\alpha_{1}(x) , \alpha_{2}(x), \dots, \alpha_{K}(x)]^\top = E ^{-1} [a_{1}(x) , a_{2}(x), \dots, a_{K}(x)]^\top\,.
\]
% which implies
% $$\begin{bmatrix}
%     \alpha_{1}(x) \\
%     \alpha_{2}(x) \\
%     \vdots \\
%     \alpha_{K}(x)
% \end{bmatrix} = E^{-1} \begin{bmatrix}
%     a_{1}(x) \\
%     a_{2}(x) \\
%     \vdots \\
%     a_{K}(x)
% \end{bmatrix}, \text{ assuming E is invertible}.$$ 
Note that, the vector $[a_{1}(x) , a_{2}(x), \dots, a_{K}(x)]^\top$ is known to us through the distribution of $P_{X, Y'}$. Additionally, we know $p'$, which is the distribution of $Y'$ and for all the distributions $Q \in \cQ(K, p)$  the distribution of $Y$ is $p$. To establish non-identifiability we shall construct two error metrics $E_1$ and $E_2$ that are (1) stochastic (\ie, has non-negative entries with column sum one),  (2) has positive determinant, (3) satisfies $p' = E_1p$ and $p' = E_2 p$ and (4) has different Bayes decision boundaries.

% To show the above results, one can use the exact same way we got the lemma for the binary case, that is basically using conditional independence properties to write $a_{k}(x)$ in terms of $\alpha_{k}(x)$ and the error rates $\eps(.,.)$. Note, $\eps(k,k) = 1- \sum_{j \not= k}\eps(j,k)$, so sum of each column of $E$ is 1. In rest of the proof 
% we want to show that the optimal decision boundary depends on the error rates, what we do is precisely get two different $E$'s which changes the optimal decision boundary but keeps all other conditions intact. One of the constrains we need to keep intact is $p' = Ep$ where $p', p$ are the class probability vectors of $Y'$ and $Y$ respectively. 

\subsubsection{The balanced case}
If $Y$ is class balanced ($p = \nicefrac{\mathbf{1}_K}{K}$) let $E$ be the error matrix accoridng to Lemma \ref{lemma:stochastic-matrix-existance} that is invertible and satisfies $p' = Ep$. We let $E_{1} = E$ and $E_{2} = E P$, where  the matrix $P$ is an even permutation matrix defined as $P = [e_2, e_3, e_1, e_4, \dots, e_K] $ and $\{e_i\}_{i = 1}^K$ as the standard basis of $\reals^K$. 
% $$P = [e_2, e_3, e_1, e_4, \dots, e_K] 
% \begin{bmatrix}
%     0 & 0 & 1 & 0 & \cdots & 0 \\
%     1 & 0 & 0 & 0 & \cdots & 0 \\
%     0 & 1 & 0 & 0 &\cdots & 0 \\
%     0 & 0 & 0 & 1 & \cdots & 0 \\
%     \vdots & \vdots & \vdots & \vdots & \ddots & 0 \\
%     0 & 0 & 0 & 0 & \cdots & 1 \\
% \end{bmatrix}.$$
Then, $E_{1}p = Ep = p'$ and since $Pp = p$ for any permutation matrix $P$ we have $ E_{2}p = EPp = Ep  = p'$. Defining 
$$[
    \alpha_{1}(x) ,
    \alpha_{2}(x) , 
    \dots , 
    \alpha_{K}(x)
]^\top = E_{1}^{-1} [
    a_{1}(x) ,
    a_{2}(x) , 
    \dots , 
    a_{K}(x)
]^\top $$ we notice that 
$$\begin{bmatrix}
    \tilde{\alpha}_{1}(x) \\
    \tilde{\alpha}_{2}(x) \\
    \tilde{\alpha}_{3}(x) \\
    \vdots \\
    \tilde{\alpha}_{K}(x)
\end{bmatrix} = E_{2}^{-1} \begin{bmatrix}
    a_{1}(x) \\
    a_{2}(x) \\
    \vdots \\
    a_{K}(x)
\end{bmatrix} = P ^{-1} E_{1}^{-1} \begin{bmatrix}
    a_{1}(x) \\
    a_{2}(x) \\
    \vdots \\
    a_{K}(x)
\end{bmatrix} = P ^{-1}  \begin{bmatrix}
    \alpha_{1}(x) \\
    \alpha_{2}(x) \\
    \vdots \\
    \alpha_{K}(x)
\end{bmatrix}= \begin{bmatrix}
    \alpha_{2}(x) \\
    \alpha_{3}(x) \\
    \alpha_{1}(x) \\
    \alpha_{4}(x) \\
    \vdots \\
    \alpha_{K}(x)
\end{bmatrix}.$$
Clearly, $\mathbf{\alpha}(x)$ and $\tilde{\mathbf{\alpha}}(x)$ might not yield the same decision boundary, as $\argmax_k \mathbf{\alpha}_k(x)$ and $\argmax_k \tilde {\mathbf{\alpha}}_k(x)$ may not always be the same. For example if $\alpha_{2}(x) = \max_{k \in [K]} \alpha_{k}(x)$ then with $E_{1}$ the optimal decision is 2  but with $E_{2}$ the optimal decision is 1.

\subsubsection{The imbalanced case}

For $p \neq \frac{\mathbf{1}_{K}}{K}$, we start with an error matrix $E_{1}$ as in Lemma \ref{lemma:stochastic-matrix-existance} such that $E_{1}p = p'$ and $E_1$ is invertible. We let $E_{2} = (1-\delta) E_{1} + \delta p' 1_{K}^{T}$, where $\delta > 0$ is chosen small enough such that $E_2$ remains invertible with positive determinant. Note that $$E_{2}p = (1 - \delta) E_{1} p + \delta p' 1_{K}^{T} p = (1 -\delta) p' + \delta p' = p'.$$ Now, say $\alpha(x) = E_{1}^{-1}a(x)$, then 
\begin{align*}
    \tilde{\alpha}(x) & = \left[(1-\delta) E_{1} + \delta p' 1_{K}^{T}\right]^{-1}a(x)  = \left[ \frac{E_{1}^{-1}}{1-\delta} - \frac{\frac{\delta}{(1-\delta)^{2}}E_{1}^{-1} p' 1_{K}^{T} E_{1}^{-1}}{1 + \frac{\delta}{1 - \delta}1_{K}^{T}E_{1}^{-1}p'}\right] a(x) \\
    & = \left[I_{K} - \frac{\frac{\delta}{1- \delta}E_{1}^{-1}p'1_{K}^{T}}{1 + \frac{\delta}{1 -\delta}1_{K}^{T}E_{1}^{-1}p'} \right] \frac{E_{1}^{-1}a(x)}{1 -\delta} \\
    & = \left[I_{K} - \frac{\frac{\delta}{1-\delta}p1_{K}^{T}}{1 + \frac{\delta}{1-\delta}1_{K}^{T}p} \right] \frac{\alpha(x)}{1-\delta} \\
    & = \frac{1}{1-\delta}\left[ \alpha(x) - \frac{\frac{\delta}{1 -\delta}p 1_{K}^{T}\alpha(x)}{1 + \frac{\delta}{1-\delta}}\right]  = \frac{1}{1-\delta} \left[ \alpha(x) - \delta p \right].
\end{align*} where the second equality is obtained using the Sherman–Morrison identity on the matrix $[(1-\delta) E_{1} + \delta p' 1_{K}^{T}]^{-1}$. 
Since $\tilde{\alpha}(x) = \frac{1}{1-\delta}[\alpha(x) - \delta p]$ and $p \neq \frac{\mathbf{1}_{K}}{K}$ the decision boundaries may not be the same as $\argmax_k \mathbf{\alpha}_k(x)$ and $\argmax_k \tilde {\mathbf{\alpha}}_k(x)$ may be different. 
% Hence, $E_{1}$ and $E_{2}$ might give 2 different optimal decision boundaries.

% \input{sections/weighted-erm}
\section{A comparison of peer loss function and weighted ERM} \label{sec:peer-loss}

\paragraph{A comparison:}
Let us consider a binary classification setup where we  observe the noisy dataset $\{(x_i,  y_i')\}_{i = 1}^n$.
In \citet[Equation (5)]{liu2020peer} the peer loss function is defined as 
\begin{equation}
\label{eq:peer-loss}
    \ell_{\text{peer}}(f(x_i), y'_i) = \ell(f(x_i), y_i') - \ell(f(x_j), y'_k)\,,
\end{equation} where $f$ is a classifier and $j\neq k$ is uniformly drawn from $[n] = \{1, \dots, n\}$. Then the peer risk for the dataset can be written as 
\begin{equation}
   \textstyle \hat L_{\text{peer}} = \frac{1}{n} \sum_{i = 1}^n \left[ \ell(f(x_i), y_i') - \frac{1}{n(n-1)}\sum_{j = 1}^n \sum_{k \neq j} \ell(f(x_j), y'_k) \right]\,.
\end{equation}
As we see in Lemma \ref{lemma:peer-loss-simplification}, the peer loss can be simplified as 
\begin{equation}
   \textstyle  \hat L_{\text{peer}} = \frac{1}{n-1} \sum_{i = 1}^n p_n(1 - y'_i) \big\{ \ell(f(x_i), y_i') - \ell(f(x_i), 1 - y'_i) \big\} \,,
\end{equation} where, for $y \in \{0, 1\}$ the $ p_n(y) = \nicefrac{1}{n} \sum_{i = 1}^n \mathbbm{1}\{y'_i = y\}$ is the proportion samples with noisy label $y$. This is strikingly similar to the class balanced weighted ERM with the weights $w_i = p_n(1 - y'_i)$ which is defined as 
\begin{equation}
  \textstyle    \hat L_{\text{weighted}} = \frac{1}{n}  \sum_{i = 1}^n p_n(1 - y'_i) \ell(f(x_i), y_i') \,.
\end{equation} In fact, for $0$-$1$ loss ($\ell(f(x), y) = \mathbbm{1}\{ f(x) \neq y\}$) the peer risk function and the weighted empirical risks are same up-to a constant adjustment in the loss. 

\paragraph{A failure case of peer-loss:}
If the loss $\ell$ is not bounded, the peer loss may not be bounded below. For example, peer loss for entropy loss ($\ell(a, y) \triangleq -ay + \log\{1 + e^a\}$) simplifies to 
\begin{equation} \label{eq:peer-loss-entropy}
     \textstyle  \hat L_{\text{peer}} = - \frac{1}{n-1} \sum_{i = 1}^n p_n(1 - y'_i) (2y_i' - 1) f(x_i)
\end{equation}  where $f(x)$ is the logit of prediction.  To understand that the peer loss in \eqref{eq:peer-loss-entropy} is may bounded below we consider logistic regression model, \ie, $f(x) = x^\top \beta$, which is a simple model, and often used in many classification tasks. Then  the corresponding peer loss
\begin{equation} \label{eq:peer-loss-logistic}
\textstyle  \hat L_{\text{peer}} = - \frac{1}{n-1} \sum_{i = 1}^n p_n(1 - y'_i) (2y_i' - 1) x_i^\top \beta = - \big\{\frac{1}{n-1} \sum_{i = 1}^n p_n(1 - y'_i) (2y_i' - 1) x_i \big\}^\top \beta
\end{equation}
can diverge to $- \infty$ as long as $\|\beta \|_2 \to \infty$ for any $\beta$ that satisfies \[
\textstyle
\big\{\frac{1}{n-1} \sum_{i = 1}^n p_n(1 - y'_i) (2y_i' - 1) x_i \big\}^\top \frac{\beta}{ \|\beta\|_2}>0\,.\]  
% Moreover, in this case if one restricts $\|\beta\|_2 \le 1$ then the peer-loss optimization simply reduces to the weighted 

\begin{lemma}
\label{lemma:peer-loss-simplification}
For each $i \in [n]$ let us assume that $j \neq k$ is uniformly drawn from $[n]$. Then the peer loss defined in \eqref{eq:peer-loss} simplifies to 
\begin{equation}
   \textstyle \frac{1}{n-1} \sum_{i = 1}^n p_n(1 - y'_i) \big\{ \ell(f(x_i), y_i') - \ell(f(x_i), 1 - y'_i) \big\} \,,
\end{equation} where, for $y \in \{0, 1\}$ the $ p_n(y) = \nicefrac{1}{n} \sum_{i = 1}^n \mathbbm{1}\{y'_i = y\}$ is the proportion samples with $y_i' = y$.

\end{lemma}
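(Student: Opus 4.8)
The plan is to start from the doubly-averaged form of the peer risk already displayed above,
\[
\hat L_{\text{peer}} = \frac{1}{n}\sum_{i=1}^n \ell(f(x_i), y_i') - \frac{1}{n}\sum_{i=1}^n \frac{1}{n(n-1)}\sum_{j=1}^n\sum_{k\neq j}\ell(f(x_j), y_k')\,,
\]
and to reduce it to the claimed expression by elementary bookkeeping. The first observation is that the inner double sum $\frac{1}{n(n-1)}\sum_{j}\sum_{k\neq j}\ell(f(x_j), y_k')$ does not depend on the outer index $i$, so averaging it over $i$ leaves it unchanged. This collapses the peer risk to $A-B$, where $A = \frac1n\sum_j \ell(f(x_j), y_j')$ is the \emph{self} term and $B = \frac{1}{n(n-1)}\sum_j\sum_{k\neq j}\ell(f(x_j), y_k')$ is the \emph{cross} term.

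The key step is to evaluate, for each fixed $j$, the inner sum $\sum_{k\neq j}\ell(f(x_j), y_k')$. Because the labels are binary, $\ell(f(x_j), y_k')$ takes only the two values $\ell(f(x_j), y_j')$ (when $y_k'=y_j'$) and $\ell(f(x_j), 1-y_j')$ (when $y_k'\neq y_j'$). Counting indices, there are exactly $n\,p_n(y_j')-1$ values of $k\neq j$ in the first group (the $-1$ excludes the diagonal $k=j$) and $n\,p_n(1-y_j')$ in the second, so I rewrite
\[
\sum_{k\neq j}\ell(f(x_j), y_k') = (n\,p_n(y_j')-1)\,\ell(f(x_j), y_j') + n\,p_n(1-y_j')\,\ell(f(x_j), 1-y_j')\,.
\]

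Substituting this into $B$ and forming $A-B$ over the common denominator $n(n-1)$, I collect the coefficient of $\ell(f(x_j), y_j')$, which becomes $(n-1)-(n\,p_n(y_j')-1)=n(1-p_n(y_j'))$. The single algebraic identity that makes everything fall into place is $p_n(0)+p_n(1)=1$, which converts $1-p_n(y_j')$ into $p_n(1-y_j')$; the coefficient of $\ell(f(x_j), 1-y_j')$ is already $-n\,p_n(1-y_j')$. Factoring $n\,p_n(1-y_j')$ out of both terms and cancelling the leading $n$ against $n(n-1)$ yields exactly $\frac{1}{n-1}\sum_j p_n(1-y_j')\{\ell(f(x_j), y_j')-\ell(f(x_j), 1-y_j')\}$, as claimed.

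The calculation is entirely elementary and uses no structural property of $\ell$; the only place requiring care is the counting step, where one must correctly subtract the excluded diagonal term $k=j$ (producing the $-1$) and then recognize that this same $-1$ is precisely what upgrades the factor $(n-1)$ into the clean weight $n\,p_n(1-y_j')$ once $A$ and $B$ are combined. I therefore expect that bookkeeping to be the main (and only) obstacle.
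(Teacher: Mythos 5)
Your proof is correct and follows essentially the same route as the paper's: both collapse the outer average over $i$ (since the cross term does not depend on $i$), sort the inner sum over $k$ by whether $y_k'=y_j'$ or $y_k'=1-y_j'$, and invoke $1-p_n(y)=p_n(1-y)$ to produce the weight $p_n(1-y_j')$. The only cosmetic difference is that the paper first adds the diagonal term $k=j$ back to work with a full sum over $k$ (producing the factor $\frac{n}{n-1}$) and then splits, whereas you split the punctured sum directly and carry the $-1$ explicitly; the bookkeeping is equivalent.
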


\begin{proof}[Proof of Lemma \ref{lemma:peer-loss-simplification}] To simplify the peer loss we begin with he following  equality. 
\begin{equation} \label{eq:b1.1}
    \begin{aligned}
    & \textstyle\frac{1}{n} \sum_{i = 1}^n \Big[ \ell(f(x_i), y_i') - \frac{1}{n(n-1)}\sum_{j = 1}^n \sum_{k \neq j} \ell(f(x_j), y'_k) \Big]\\
    &\textstyle = \frac{1}{n} \sum_{i = 1}^n   \ell(f(x_i), y_i') - \frac{1}{n(n-1)}\sum_{j = 1}^n \sum_{k \neq j} \ell(f(x_j), y'_k)  \\
    & \textstyle= \frac{1}{n} \sum_{i = 1}^n  
  \ell(f(x_i), y_i') - \frac{1}{n(n-1)}\sum_{i = 1}^n \sum_{k \neq i} \ell(f(x_i), y'_k)  \\
    &\textstyle = \frac{1}{n} \sum_{i = 1}^n  \Big[ \ell(f(x_i), y_i') - \frac{1}{(n-1)} \sum_{k \neq i} \ell(f(x_i), y'_k) \Big]
    \end{aligned}
\end{equation} where the third equality is obtained simply replacing the index $j$ with the index $i$. Here, we notice that 
\begin{equation}\label{eq:b1.2}
    \begin{aligned}
    &\textstyle \ell(f(x_i), y_i') - \frac{1}{(n-1)} \sum_{k \neq i} \ell(f(x_i), y'_k)\\
    &\textstyle =  \Big\{1 + \frac1 {n-1}\Big\} \ell(f(x_i), y_i') - \frac{1}{(n-1)} \sum_{k = 1}^n \ell(f(x_i), y'_k)\\
    &\textstyle = \frac{n}{n-1} \ell(f(x_i), y_i') - \frac{n}{n-1} \times  \frac{1}{n} \sum_{k = 1}^n \ell(f(x_i), y'_k) \\
    &\textstyle = \frac{n}{n-1} \big\{ \textstyle\ell(f(x_i), y_i') - \frac{1}{n} \sum_{k = 1}^n \ell(f(x_i), y'_k)\big\}
    \end{aligned}
\end{equation} and that 
\begin{equation} \label{eq:b1.3}
    \begin{aligned}
    &\textstyle \ell(f(x_i), y_i') - \frac{1}{n} \sum_{k = 1}^n \ell(f(x_i), y'_k)\\
    & = \textstyle\ell(f(x_i), y_i') - \frac{1}{n} \sum_{k = 1}^n \ell(f(x_i), y'_i) \mathbbm{1}\{ y_k' = y_i'\\ 
    &\textstyle ~ - \frac{1}{n} \sum_{k = 1}^n \ell(f(x_i), 1 - y'_i) \mathbbm{1}\{ y_k' = 1- y_i'\}\\
    & = \ell(f(x_i), y_i') -  \ell(f(x_i), y'_i) p_n(y_i') -  \ell(f(x_i), 1 - y'_i) p_n(1 - y_i') \\
    & = p_n( 1- y_i') \big\{\ell(f(x_i), y_i') - \ell(f(x_i), 1- y_i')\big\}\,.
    \end{aligned}
\end{equation} Combining \eqref{eq:b1.1}, \eqref{eq:b1.2} and \eqref{eq:b1.3} we have the result. 
\end{proof}
\section{Technical results}
\label{sec:technical-results}
\begin{proof}[Proof of lemma \ref{lemma:reweighting-informal}]
Let us define a distribution $Q$ as $q(x, y, y') = p(x, y, y') w(x, y,y')$ where  $w(x, y, y') = c P(Y' = 1-y')$ for some constant $c > 0$. 
According to lemma \ref{lemma:instance-independence-reweighting} instance independence assumption is still valid for $Q$, \ie, $q(x, y, y') = q(x\mid y) Q(Y = y, Y' = y')$. Thus
\begin{equation}
    \begin{aligned}
    q(x\mid Y = y) Q(Y = y, Y' = y') = p(x\mid Y = y) P(Y = y, Y' = y') c P(Y' = 1-y')\,.
    \end{aligned}
\end{equation} If we integrate both sides with respect to $x$ over the space $\cX$ then we have $\int_{\cX} q(x\mid Y = y) dx = \int_{\cX} p(x\mid Y = y) dx = 1$ and the above equation reduces to 
\begin{equation}
    Q(Y = y, Y' = y') =  P(Y = y, Y' = y') c P(Y' = 1-y')\,.
\end{equation} Now we take a summation over $y$ in the both sides and obtain 
\begin{equation}
    \begin{aligned}
   &\textstyle \sum_{y} Q(Y = y, Y' = y') = \sum_{y} P(Y = y, Y' = y') c P(Y' = 1-y')\\
   \text{or,} ~ & Q(Y' = y') = P(Y' = y') c P(Y' = 1-y') = c p' (1 - p')
    \end{aligned}
\end{equation} where $p' = P(Y' = 1)$. Since, $\sum_{y'}Q(Y' = y') = 1$, from the above equation we obtain $c = \frac{1}{2p'(1 - p')}$ and $w(x, y, y') = \frac1{2 P(Y' = y')}$.

For $y \in \{0, 1\}$ let us define $\alpha_y (x) = q(x \mid Y' = y') Q(Y' = y')$, $a_y = p(x \mid Y = y) P(Y = y)$, $e_0 = P(Y' = 1\mid Y = 0)$, $e_1 = P(Y' = 0\mid Y = 1)$ and notice that 
\begin{equation} \label{eq:rw.1}
   \begin{aligned}
       \alpha_1 (x) & = q(x \mid Y' = 1) Q(Y' = 1) = q(x, 1, 1) + q(x, 0, 1)\\
       & =\textstyle \frac{p(x, 1, 1)}{2 p'} + \frac{p(x, 0, 1)}{2 p'}\\
       & = \textstyle\frac{p(x\mid Y =  1) P(Y = 1)P(Y' = 1\mid Y = 1)}{2 p'}+ \frac{p(x\mid Y =  0) P(Y = 0)P(Y' = 1\mid Y = 0)}{2 p'}\\
       & = \textstyle \frac{a_1(x) \nicefrac{1}{2} (1 - e_1)}{ 2 p'} + \frac{a_0(x) \nicefrac{1}{2} e_0}{ 2 (1- p')} = \frac{a_1(x)  (1 - e_1)}{ 4 p'} + \frac{a_0(x)  e_0}{ 4 p'}\,.
   \end{aligned} 
\end{equation}
Similarly, we obtain 
\begin{equation} \label{eq:rw.2}
    \textstyle \alpha_0(x) = \frac{a_1(x)  e_1}{ 4 (1- p')} + \frac{a_0(x)  (1 - e_0)}{ 4 (1 - p')}\,.
\end{equation} Taking the differences between \eqref{eq:rw.1} and \eqref{eq:rw.2} we obtain 
\begin{equation}
 \textstyle   \alpha_1(x) - \alpha_0(x) = \frac{a_1(x)}{2} \left ( \frac{  1 - e_1}{ 2 p'} - \frac{e_1}{ 2(1 -  p')}\right)  + \frac{a_0(x)}{2} \left( \frac{  e_0}{ 2 p'} - \frac{1 - e_0}{ 2(1 -  p')} \right)\,.
\end{equation} Here, we use 
\begin{equation}
    p' = P(Y' = 1) = \nicefrac{1}{2} (1 - e_1) + \nicefrac{1}{2} e_0  \implies  2p' = 1 - e_1 + e_0,\ ~ \text{and} ~ 2( 1- p') = 1 - e_0 + e_1
\end{equation} in the above equation and obtain 
\begin{equation}
    \begin{aligned}
        \textstyle \alpha_1(x) - \alpha_0(x) & = \textstyle \frac{a_1(x)}{2} \left ( \frac{  1 - e_1}{ 2 p'} - \frac{e_1}{ 2(1 -  p')}\right)  + \frac{a_0(x)}{2} \left( \frac{  e_0}{ 2 p'} - \frac{1 - e_0}{ 2(1 -  p')} \right)\\
        & =\textstyle \frac{a_1(x)}{2} \left ( \frac{  1 - e_1}{ 1 - e_1 + e_0} - \frac{e_1}{ 1 - e_0 + e_1}\right)  + \frac{a_0(x)}{2} \left( \frac{  e_0}{ 1 - e_1 + e_0} - \frac{1 - e_0}{ 1 - e_0 + e_1} \right)\\
        & =\textstyle \frac{a_1(x)}{2} \left ( \frac{  1 - e_1}{ 1 - e_1 + e_0} - \frac{e_1}{ 1 - e_0 + e_1}\right)  + \frac{a_0(x)}{2} \left( 1 - \frac{  1 - e_1}{ 1 - e_1 + e_0} - 1 +  \frac{e_1}{ 1 - e_0 + e_1} \right)\\
        & =\textstyle \frac{a_1(x)}{2} \left ( \frac{  1 - e_1}{ 1 - e_1 + e_0} - \frac{e_1}{ 1 - e_0 + e_1}\right)  + \frac{a_0(x)}{2} \left(  - \frac{  1 - e_1}{ 1 - e_1 + e_0}  +  \frac{e_1}{ 1 - e_0 + e_1} \right)\\
        & =\textstyle \frac{a_1(x) - a_0(x)}{2} \left ( \frac{  1 - e_1}{ 1 - e_1 + e_0} - \frac{e_1}{ 1 - e_0 + e_1}\right) = \textstyle \frac{a_1(x) - a_0(x)}{2}  \left ( \frac{1 - e_1 - e_0}{(1 - e_1 + e_0) (1 - e_0 + e_1)}\right)\,.
    \end{aligned}
\end{equation} Since, $1 - e_1 + e_0, 1 - e_0 + e_1 \ge 1 - e_1 - e_0 > 0$ we have $\frac{1 - e_1 - e_0}{(1 - e_1 + e_0) (1 - e_0 + e_1)} > 0$. Now we see that $a_1(x) \ge a_0(x)$ if and only if $\alpha_1(x) \ge \alpha_0(x)$. Now, noticing that (1) $P(Y = 1\mid X = x) \ge \nicefrac{1}{2}$ if and only if $a_1(x) \ge a_0(x)$, and (2) $Q(Y' = 1\mid X = x) \ge \nicefrac{1}{2}$ if and only if $\alpha_1(x) \ge \alpha_0(x)$ we have 
\[
\big\{x: Q(Y' = 1\mid X = x) \ge  \nicefrac{1}{2} \big\}  = \big\{x: P(Y = 1\mid X = x) \ge  \nicefrac{1}{2} \big\}\,.
\] This implies the Bayes decision boundaries for $Y'$ on the $q(x, y, y')$ distribution and for $Y$ on $P$ distribution are same and \begin{equation}
    f^\star_P(x)= \argmin_{f \in \mathcal{F}} \Ex_P[p'(1 - Y') \ell(f(X), Y')]\,.
\end{equation}
\end{proof}

\begin{lemma}[Reweighting of the noisy labels] \label{lemma:instance-independence-reweighting}
    Say $P \in \cQ(K,p)$ satisfying the conditional independence property that $P_{Y'|Y,X} = P_{Y'|Y}$ and $p' = P_{y}$. Then the sample can be reweighted to obtain class balanced $Y'$ that is $p' = P_{y} = \frac{1_{k}}{k}$ while satisfying the conditional independence $X \perp Y' \mid Y $ condition for the reweighted distribution.
\end{lemma}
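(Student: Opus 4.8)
The plan is to exhibit the reweighted distribution explicitly and then check the two required properties in turn. Because the learner only observes $(X, Y')$, I would look for sample weights that are functions of the noisy label alone, and so define a candidate $Q$ by $q(x, y, y') \triangleq \frac{1}{Z}\, p(x, y, y')\, w(y')$, where $w(\cdot) \ge 0$ depends only on $y'$ and $Z = \sum_{y'} w(y')\, p'(y')$ normalizes, with $p'(y') = P(Y' = y')$. This is precisely the population version of reweighting each observed sample $(X_i, Y_i')$ by $w(Y_i')$.

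The conceptually central step is to verify that the conditional independence $X \perp Y' \mid Y$ survives the reweighting. The point is that $w(y')$ is free of $x$, so it cancels in the conditional density: $q(x \mid y, y') = \frac{p(x, y, y')\, w(y')}{w(y')\int p(x', y, y')\, dx'} = p(x \mid y, y')$. The instance-independence of $P$ gives $p(x \mid y, y') = p(x \mid y)$, which is free of $y'$; averaging over $y'$ then yields $q(x \mid y) = p(x \mid y)$ as well, so $q(x \mid y, y') = q(x \mid y)$ and $X \perp Y' \mid Y$ holds under $Q$.

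It then remains to choose $w$ so that the reweighted $Y'$-marginal is uniform. Since $Q(Y' = k) = w(k)\, p'(k)/Z$, taking $w(k) = 1/p'(k)$ makes $w(k)\, p'(k) = 1$ for each class, whence $Z = \sum_k 1 = K$ and $Q(Y' = k) = 1/K$ for all $k$ — the desired $\frac{\mathbf{1}_K}{K}$ marginal. For $K = 2$ the equivalent choice $w(y') \propto p'(1 - y')$ recovers exactly the weights used in the proof of Lemma \ref{lemma:reweighting-informal}.

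The main obstacle I anticipate is the independence step — ensuring the reweighting does not covertly reintroduce a dependence of $X$ on $Y'$ given $Y$ — but this collapses to the cancellation of the $x$-free factor $w(y')$ in the conditional density and is therefore immediate. The remaining points are routine bookkeeping: $Z$ is finite and positive, and the weights $1/p'(k)$ are well defined provided each $p'(k) > 0$, a mild nondegeneracy condition (absent which some class of $Y'$ would never appear in the sample).
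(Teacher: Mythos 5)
Your proposal is correct and follows essentially the same route as the paper's proof: both reweight by $w(y') \propto 1/p'(y')$ (normalized so the total mass is one, giving the uniform marginal $Q(Y'=k)=1/K$) and both establish preservation of $X \perp Y' \mid Y$ by noting that the $x$-free weight cancels in the conditional density, leaving $q(x\mid y,y') = p(x\mid y)$. Your presentation is marginally more modular — verifying the independence step for an arbitrary weight function of $y'$ before specializing, and flagging the $p'(k)>0$ nondegeneracy condition — but the substance is identical.
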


\begin{proof}
After the reweighting 
\[ \textstyle
q(x,y,y')  = p(x,y,y') \frac{1}{k p(y')}  = p(x|y) \frac{p(y,y')}{kp(y')}  = \frac{p(x|y)p(y|y') }{k} 
\]
the $Y'$ gets class balanced, as seen below.
\begin{align*}
    q(y') & = \textstyle\sum_{y}\int_{x,y}q(x,y,y') dx   = \textstyle\frac{1}{k}\sum_{y}\int_{x,y} p(x|y) p(y|y') dx  \\
    & = \textstyle\frac{1}{k} \sum_{y} \left(\int_{x} p(x|y) dx \right)p(y|y')   = \textstyle\frac{1}{k} \sum_{y} p(y|y')   = \frac{1}{k}
\end{align*}
For such reweighting the conditional independence of $X \perp Y' \mid Y $ still is satisfied for $q$:
\[
\textstyle q(x|y,y') = \frac{q(x,y,y')}{q(y,y')} = \frac{\frac{1}{k} p(x|y) p(y|y')}{\frac{1}{k} p(y|y')} = p(x|y).
\]
\end{proof}

\begin{lemma} \label{lemma:stochastic-matrix-existance}
    Let $p$ and $p'$ be any probability vectors on the space $[K]$. Let us assume that the entries of $p$ and $p'$ are all positive then there exists a matrix $E$ such that (1) its entries are non-negative, (2) the column sums are all one, (3) the determinant is positive, and (4) $p' = Ep$. 
\end{lemma}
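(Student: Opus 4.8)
The plan is to exhibit an explicit matrix rather than argue abstractly. The obvious first candidate is $E_0 = p'\,\mathbf{1}_K^\top$, the matrix all of whose columns equal $p'$. It already satisfies conditions (1), (2) and (4): its entries are the strictly positive numbers $p'_i$, each column sums to $\mathbf{1}_K^\top p' = 1$, and $E_0 p = p'(\mathbf{1}_K^\top p) = p'$ because $p$ is a probability vector. Its only defect is that it has rank one, hence determinant zero, so the entire task reduces to repairing condition (3) without disturbing the two affine constraints $Ep = p'$ and $\mathbf{1}_K^\top E = \mathbf{1}_K^\top$.

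The key idea I would use is to anchor the rank-one term to a multiple of the identity instead of the zero matrix: for a scalar $d>0$ I would set
\[
E = d\,I_K + (p' - d\,p)\,\mathbf{1}_K^\top .
\]
Using only $\mathbf{1}_K^\top p = \mathbf{1}_K^\top p' = 1$, I would check the affine constraints directly: $Ep = dp + (p'-dp)(\mathbf{1}_K^\top p) = p'$, and $\mathbf{1}_K^\top E = d\mathbf{1}_K^\top + (1-d)\mathbf{1}_K^\top = \mathbf{1}_K^\top$, so (2) and (4) hold for every $d$. For the determinant I would apply the matrix determinant lemma (equivalently, read off eigenvalues: $d I_K$ shifts every eigenvalue by $d$, while the rank-one term $(p'-dp)\mathbf{1}_K^\top$ contributes the single nonzero eigenvalue $\mathbf{1}_K^\top(p'-dp) = 1-d$ and otherwise $0$), obtaining eigenvalues $1$ and $d$ with multiplicity $K-1$, so $\det E = d^{K-1}>0$.

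It then remains to secure non-negativity. The diagonal entries are $E_{ii} = p'_i + d(1-p_i)$, which are positive for any $d\ge 0$ since $p'_i>0$ and $p_i\le 1$; the off-diagonal entries are $E_{ij} = p'_i - d\,p_i$, non-negative exactly when $d \le p'_i/p_i$. Because all coordinates of $p$ and $p'$ are strictly positive, $d_\star \triangleq \min_i p'_i/p_i$ is strictly positive, and any choice $0 < d \le d_\star$ makes all four conditions hold at once, completing the construction.

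I expect the genuine obstacle to be the tension between forcing a positive determinant (3) and preserving the affine constraints (2) and (4): a small generic perturbation of the rank-one base $E_0$ does not help, because for $K\ge 3$ the adjugate of a rank-one matrix vanishes and the determinant acquires no first-order term, so one would be driven into a delicate higher-order perturbation analysis. The identity-anchored form above sidesteps this entirely, since the rank-one piece is now a correction to an \emph{invertible} matrix and the determinant can be evaluated exactly rather than perturbatively.
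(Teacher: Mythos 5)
Your proof is correct, and it takes a genuinely different route from the paper's. The paper argues combinatorially: it permutes coordinates so that $p'-p$ is sorted in decreasing order, splits the indices at $K_1 = \max\{k : [Pp]_k \le [Pp']_k\}$, and writes down a sparse stochastic matrix $\tilde E$ whose first $K_1$ columns are canonical basis vectors and whose remaining columns route the surplus mass of the deficient coordinates to the ones in excess; positivity of the determinant comes from $\tilde E$ being upper triangular with positive diagonal entries $\prod_{i \ge K_1+1} [Pp']_i/[Pp]_i$, and the final matrix is $E = P^\top \tilde E P$. Your single formula $E = d\,I_K + (p'-dp)\mathbf{1}_K^\top$ with $0 < d \le \min_i p'_i/p_i$ replaces all of that bookkeeping: the two affine constraints $Ep=p'$ and $\mathbf{1}_K^\top E = \mathbf{1}_K^\top$ hold identically in $d$, the determinant is exactly $d^{K-1}$ by the matrix determinant lemma (or the eigenvalue count you give), and non-negativity just pins down the admissible range of $d$. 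What each buys: your construction is shorter, free of case analysis and permutation conjugation, gives the determinant in closed form, and is stylistically consonant with the paper's own rank-one manipulation in the imbalanced non-identifiability argument, where it sets $E_2 = (1-\delta)E_1 + \delta p'\mathbf{1}_K^\top$ and invokes Sherman--Morrison; the paper's construction, in exchange, produces a sparse matrix with a transparent mass-transport interpretation and a determinant that is not forced to be small, though nothing in the lemma requires either feature. Your closing observation about why a generic perturbation of the rank-one base $p'\mathbf{1}_K^\top$ fails (the adjugate of a rank-one matrix vanishes for $K \ge 3$, so the determinant has no first-order term) is accurate, and it correctly identifies why anchoring to the identity is the right move, though it is not needed for the proof itself.
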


\begin{proof}[Proof of lemma \ref{lemma:stochastic-matrix-existance}]
Let us assume that $P \in \reals^{K \times K}$ is the permutation matrix that reorders the $p'-p$ in a decreasing fashion, \ie, the entries of $P(p'-p)$ are decreasing. Note that $Pp$ and $Pp'$ are still probability vectors that have positive entries. Let us define $K_1 = \max\{k : [Pp]_k \le [Pp']_k\}$. Now we define our stochastic matrix $\tilde E = [\tilde e_{ij}]_{i, j\in [K]}$ as the following. 
\begin{equation}
 \tilde e_{i, j}  =    \begin{cases}
    1 & \text{if} ~ i = j \le K_1, \\
    \nicefrac{[Pp']_i}{[Pp]_i} & \text{if} ~ i = j \ge K_1 +1, \\
    \frac{[P(p' -p)]_i [P(p - p')_j]}{ [Pp]_j\sum_{k = 1}^{K_1}[P(p' -p)]_k} & \text{if} ~ i \le K_1, ~  j \ge K_1 +1,\\
    0 & \text{elsewhere}
    \end{cases}
\end{equation} 

Note that, for $i \le K_1$
\[
\begin{aligned}
\textstyle\sum_{j = 1}^K \tilde e_{ij} [Pp]_j & =\textstyle \tilde e_{ii} [Pp]_i + \sum_{j \ge K_1 + 1} \tilde e_{ij} [Pp]_j \\
& = \textstyle1\times {[Pp]_i} +  \sum_{j \ge K_1 +1 } \frac{[P(p' -p)]_i [P(p - p')_j]}{ \cancel{[Pp]_j}\sum_{k = 1}^{K_1}[P(p' -p)]_k} \times  \cancel{[Pp]_j}\\
& =\textstyle [Pp]_i + [P(p' -p)]_i \times \frac{\sum_{j = K_1 +1 }^{K}  [P(p - p')_j]}{\sum_{k = 1}^{K_1}[P(p' -p)]_k}
\end{aligned}
\] If it holds (to be established later)
\begin{equation}\label{eq:eq-tech1}
   \textstyle \sum_{j = K_1 +1 }^{K}   [P(p - p')_j] = \sum_{k = 1}^{K_1}[P(p' -p)]_k
\end{equation} then for $i \le K_1$ we have
\[\textstyle
\sum_{j = 1}^K \tilde e_{ij} [Pp]_j = [Pp']_i\,.
\] Additionally for $i \ge K_1 +1 $ we have 
\[ \textstyle
\sum_{j = 1}^K \tilde e_{ij} [Pp]_j = e_{ii} [Pp]_i = \frac{[Pp']_i}{[Pp]_i} \times [Pp]_i = [Pp']_i\,.
\] This implies $Pp' = \tilde E Pp$ or $p' = P^{-1} \tilde E Pp$. Since the permutation matrix $P$ is also orthogonal, we have $P^{-1} \tilde E P = P^\top \tilde E P$. We define $E = P^\top \tilde E P$ Then $p' = Ep$, which verifies (4).  Note that $E$ is obtained simply by permuting the rows and columns of $\tilde E$ according to the permutation matrix $P^\top$.

Clearly, (1) is satisfied because the entries of $\tilde E$ and hence of $E$ are non-negative. 

We verify (2) for $\tilde E$, which implies the same for $E$, because 
\[
\mathbf{1}^\top  E  = \mathbf{1}^\top P^\top \tilde E P = \mathbf{1}^\top \tilde E P = \mathbf{1}^\top  P = \mathbf{1}^\top\,.
\]
We notice that for $j \le K_1$ the $j$-th column is simply $e_j$ (the $j$-th canonical basis of $\reals^K$) which has column sum one. If $j \ge K_1 +1$ then the column sum is 
\[
\begin{aligned}
\textstyle\sum_{i = 1}^K \tilde e_{i, j} & = \textstyle \sum_{i \le K_1}  \frac{[P(p' -p)]_i [P(p - p')_j]}{ [Pp]_j\sum_{k = 1}^{K_1}[P(p' -p)]_k} + \frac{[Pp']_j}{[Pp]_j} =\textstyle   \frac{\cancel{\Big(\sum_{i \le K_1}[P(p' -p)]_i\Big)} [P(p - p')_j]}{ [Pp]_j\cancel{\sum_{k = 1}^{K_1}[P(p' -p)]_k}} + \frac{[Pp']_j}{[Pp]_j}\\
& =\textstyle \frac{[P(p - p')_j] + [Pp']_j}{[Pp]_j} = 1\,.
\end{aligned}
\]

To verify (3) we first notice that $\tilde E$ and $E = P^\top \tilde E P$ have the same determinant, since $P$ is an orthogonal matrix. So, we shall only prove that $\tilde E$ have positive determinant. Now, we notice that $\tilde E$ is a upper triangular matrix, whose determinant is equal the product of the diagonal entries, \ie,  $\prod_{i \ge K_1 + 1} \nicefrac{[Pp']_i}{[Pp]_i}$. Since $Pp$ and $Pp'$ have positive entries, the determinant is positive.

It remains to verify \eqref{eq:eq-tech1}. Since 
$
\sum_{i = 1}^K [P(p'-p)]_i = 0
$ we have 
\[ \textstyle
\sum_{i = 1}^{K_1} [P(p'-p)]_i = - \sum_{i = K_1}^{K} [P(p'-p)]_i = \sum_{i = K_1}^{K} [P(p-p')]_i
\] which verifies  \eqref{eq:eq-tech1}.
\end{proof}

\end{document}